\relax
\documentclass[letterpaper]{article}
\usepackage[final]{nips_2016} 
\usepackage{times}
\usepackage{helvet}
\usepackage{courier}
\usepackage{amssymb}
\usepackage{graphicx}
\usepackage[lofdepth, lotdepth]{subfig}
\usepackage{algorithm}
\usepackage{algorithmic}
%
%
\newcommand{\learner}{\mathcal{L}}
\newcommand{\target}{c^*}
\newcommand{\scriptC}{\mathcal{C}}

\newcommand{\scriptF}{\mathcal{F}}
\newcommand{\real}{\mathbb{R}}
{\vspace{0.1in}}
\newtheorem{proposition}{\vspace{0.1in}{\noindent \bf Proposition}}{\vspace{0.1in}}
\newtheorem{lemma}{\vspace{0.1in}{\noindent \bf Lemma}}{\vspace{0.1in}}
\newenvironment{proof}{\vspace{0.1in}{\noindent \bf Proof}}{\vspace{0.1in}}

%
\frenchspacing
\setlength{\pdfpagewidth}{8.5in}
\setlength{\pdfpageheight}{11in}
\pdfinfo{
/Title (Teaching with features and labels)
/Author (Christopher Meek, Patrice Simard, Xiaojin Zhu)}
\setcounter{secnumdepth}{0}  
 \begin{document}
%
\title{Analysis of a Design Pattern for Teaching with Features and Labels}
\author{Christopher Meek$^\dagger$, Patrice Simard$^\dagger$, Xiaojin Zhu$^+$ \\ Microsoft Research$^\dagger$ and University of Wisconsin-Madison$^+$}
\maketitle
\begin{abstract}
We study the task of teaching a machine to classify objects using features and labels. We introduce the Error-Driven-Featuring design pattern for teaching using features and labels in which a teacher prefers to introduce features only if they are needed. We analyze the potential risks and benefits of this teaching pattern through the use of teaching protocols, illustrative examples, and by providing bounds on the effort required for an optimal machine teacher using a linear learning algorithm, the most commonly used type of learners in interactive machine learning systems. Our analysis provides a deeper understanding of potential trade-offs of using different learning algorithms and between the effort required for featuring and labeling.
\end{abstract}

\section{Introduction}
Featuring and labeling are critical parts of the interactive machine learning process in which a person and a machine learning algorithm coordinate to build a predictive system (a classifier, entity extractor, etc.). Unlike the case of using labels alone, little is known about how to quantify the effort required to teach a machine using both features and labels. In this paper, we consider the problem of teaching a machine how to classify objects when the teacher can provide labels for objects and provide features---functions from objects to values. Our aim is to understand the effort required by a teacher to find a suitable representation for objects, to teach the target classification function, and to provide guidance to teachers about how to provide features and labels when teaching.

Similar to previous work on active learning and teaching dimension, we take an idealized view of the cost of labeling and featuring. In particular, we ignore variability in the effort required for these respective actions. In addition, similar to the work on teaching dimension, we assume an idealized teacher with complete knowledge about the learner, target classification function and the range of possible objects that we want to classify.

We analyze the effort required to teach a classification function relative to a given set of feature functions. This set of features functions can be thought of as a set of teachable functions. There are several observations that motivate us to quantify teaching effort relative to a set of feature functions. It is natural to expect that the available set of teachable functions depends on the specific learner that we are teaching and the  types of objects that we want to classify (e.g., images versus text documents). In addition, the teaching effort required to teach a learner is heavily dependent on the available set of functions. For instance, if the teacher could directly teach the learner the target classification function then only one function would be required, and, for wide variety of learning algorithms, the teacher would only be require to provide two labeled examples in the case of binary classification. Of course, it is unreasonable to expect that the target classification function can directly be encoded in a feature function and, in fact, if this is possible then we need not use a machine learning algorithm to build the predictor. For these reasons, we assume that there is a set of features that are teachable and define the effort relative to this set of features. In order to capture dependencies among features we consider a lattice of sets of features rather than a set of features. We use the lattice to enforce our assumption that features are taught one at a time and to capture other dependencies such as only allowing features to be taught if all of the constituent features have been taught (e.g., the feature of {\em tall and heavy} can only be included in a feature set if the features of being {\em tall} and of being {\em heavy} have previously been defined). Thus, the lattice of feature sets captures the potential alternative sequences of features that the teacher can use to teach a learner.

We introduce the Error-Driven-Featuring (EDF) design pattern for teaching in which the teacher prefers to add features only if they are needed to fix a prediction error on the training set.
In order to analyze the risks and benefits of the EDF teaching pattern we consider two teaching protocols, one which forces the teacher to use the EDF teaching pattern and the other which does not. By quantifying the featuring and labeling effort required by these protocols we can provide a deeper understanding of the  risks and benefits of the EDF pattern and potential trade-offs between featuring and labeling more generally. In our analysis we consider two specific learning algorithms; a one-nearest-neighbor classifier and a linear classifier.
Using our measures of teaching cost we demonstrate that there are significant risks of adding features for high-capacity learning algorithms (1NN) which can be controlled by using a low-capacity learning algorithm (linear classifier). We also demonstrate that the additional labeling costs associated with using the EDF teaching pattern for both high and low capacity learning algorithms can be bounded. The combination of these results suggest that it would be valuable to empirically evaluate the EFT design pattern for teaching. In analyzing the costs of the Error-Driven-Featuring protocol we provide new results on the hypothesis specific pool-based teaching dimension of linear classifiers and pool-based exclusion dimension of linear classifiers. \footnote{This paper is an extended version of the paper by Meek et al (2016). \nocite{MeekSimardZhu2016designpatterns}}

\section{Related Work}

There has been a a variety of work aimed at understanding the labeling effort required to build classifiers. In this section we briefly review related work.  First we note that this work shares a common roots with the work of Meek (2016) but there the focus is on prediction errors rather than teaching effort. \nocite{Meek2016errors}

One closely related concept is that of teaching dimension. The primary aim of this work is to quantify the worst case minimal effort to teach a learner one classification function (typically called a concept in this literature) from among a set of alternative classification functions. There is a large body of work aimed at understanding the teaching dimension, refining teaching dimension (e.g., extended, recursive) and the relationship between these and other concepts from learning theory such as the VC-dimension (e.g., Doliwa et al 2014, Balbach 2008, Zilles et al 2011). \nocite{JMLR:v15:doliwa14a, Balbach:2008:MTU:1365093.1365255,Zilles2011Models}  Our work, rather than attempting to quantify the difficulty of learning among a set of classifications, is aimed at quantifying the effort required to teach any particular classification function and to understand the relationship between adding features and adding labels. The work on teaching dimension abstracts the role of the learner and rather deals directly with hypothesis classes of classification functions. Furthermore, the work on teaching dimension abstracts away the concept of features making it useless for understanding the interplay between learner, featuring and labeling.  That said, several of the concepts that we use have been treated previously in this and related literature. For instance, the idea of a concept teaching set is closely related to that of a teaching sequence (Goldman and Kearns 1995)\nocite{Goldman1995Complexity} and our optimal concept specification cost is essentially the specification number of a hypothesis (Anthony et al 1992)\nocite{Anthony:1992:ESE:130385.130420}; we add concept to distinguish it from representation specification cost.  Other existing concepts include the exclusion dimension (Angluin 1994) \nocite{Angluin:2004:QR:982360.982362} and the unique specification dimension (Hedigus 1995) \nocite{Hegedus:1995:GTD:225298.225311} and the certificate size (Hellerstein et al 1996) which are similar to our invalidation cost. \nocite{Hellerstein:1996:MQN:234752.234755} In addition, Liu et al (2016) \nocite{Liu2016ICML} define the teaching dimension of a hypothesis which is equivalent to the specification number and our concept specification cost. They also provide bounds on the concept specification cost for linear classifiers. Their results are related to our Proposition~\ref{thm:LinConceptSpecBound} but, unlike our result, assume that the space of objects is dense. In the terms of Zhu (2015), \nocite{Zhu2015Machine} we provide the hypothesis specific teaching dimension for pool-based teaching. For many domains such as image classification, document classification and entity extraction and associated feature sets the assumption of a dense representation is unnatural (e.g., we cannot have a fractional number of words in a document). Like other work on classical teaching dimension, this work does not consider teaching with both labels and features.

The other body of related work is active learning.  
The aim of this body of work is to develop algorithms to choose which items to label and the quality of an algorithm is measured by the number of labels that are required to obtain a desirable classification function. Thus, given our interest on both labeling and featuring this body of work is perhaps better named ``active labeling''. In contrast to the work on teaching dimension where the teacher has access to target classification function, in active learning, the teacher must choose the item to label without knowledge of the target classification function. This makes active learning critical to many practical systems. An excellent survey of research in this area is given by Settles (2012).\nocite{settles.book12} Not surprisingly, the work on active learning is related to work on teaching dimension (Hanneke 2007).\nocite{Hanneke2007Teaching}

\section{Features, Labels and Learning Algorithms}

In this section, we define features, labels and learning algorithms. These three concepts are the core concepts needed to discuss the cost of teaching a machine to classify objects. Thus, these definitions are the foundation of the remainder of the paper. In addition to providing these definitions, we also describe two properties of learning algorithms related to machine teaching and we describe two specific learning algorithms that are used in the remainder of the paper. 

We are interested in building a classifier of objects. We use $x$ and $x_i$ to denote particular objects and $X$ to denote the set of objects of interest. We use $y$ and $y_i$ for particular labels and $Y$ to denote the space of possible labels. For binary classification $Y=\{0,1\}$.  A classification function is a function from $X$ to $Y$.\footnote{Note that, while we call this mapping a classification function, the definition encompasses a broad class of prediction problems including structured prediction, entity extraction, and regression.} The set of classification functions is denoted by $\scriptC = X\rightarrow Y$. We use $\target$ to denote the target classification function.

Central to this paper are features or functions which map objects to scalar values. A {\em feature} $f_i$ (or $g_i$) is a function from objects to real numbers (i.e. $f_i \in X\rightarrow \real$). A {\em feature set} is a set of features and we use $F,F_i$ and $G_i$ to denote generic feature sets. The feature set $F_i=\{f_{i,1},\ldots,f_{i,p}\}$ is $p$-dimensional. We use a $p$-dimensional feature set to map an object to a point in $\real^p$. We denote the mapped object $x_k$ using feature set $F_i$ by $F_i (x_k )=(f_{i,1} (x_k ),\ldots,f_{i,p} (x_k ))$ where the result is a vector of length $p$ where the $j^{th}$ entry is the result of applying the $j^{th}$ feature function in $F_i$ to the object. 

We define the potential sequences of teachable features via a lattice of feature sets. Our definition of a feature lattice enforces the restriction that features are taught sequentially. We use $R=\{f_1,f_2, f_3, \ldots \}$ to denote the set of all teachable feature functions for a set of objects $X$. A feature lattice $\scriptF$ for a feature set $R$ is a set of finite subsets of $R$ (thus $\scriptF\subseteq 2^R$) such that if $F_i\in\scriptF$ then either $F_i=\emptyset$ or there is a $F_j\in \scriptF$ such that $F_j\subset F_i$  and $|F_j |+1=|F_i |$. We restrict attention to finite sets to capture the fact that teachers can only teach a finite number of features. 
We note that the feature lattice also allows us to represent constraints on the order in which features can be taught. Such constraints arise naturally. For instance, before teaching the concept of the area of a rectangle one needs to first teach the concepts of length and width (e.g., feature $f_3(x)=f_1(x)\times f_2(x)$ can be added only if both $f_1$ and $f_2$ have been added as features).

These definitions are illustrated in Figure~\ref{fig:TeachingEffort}.

\begin{figure}
\subfloat[][] 
{
\includegraphics[width=2.72in]{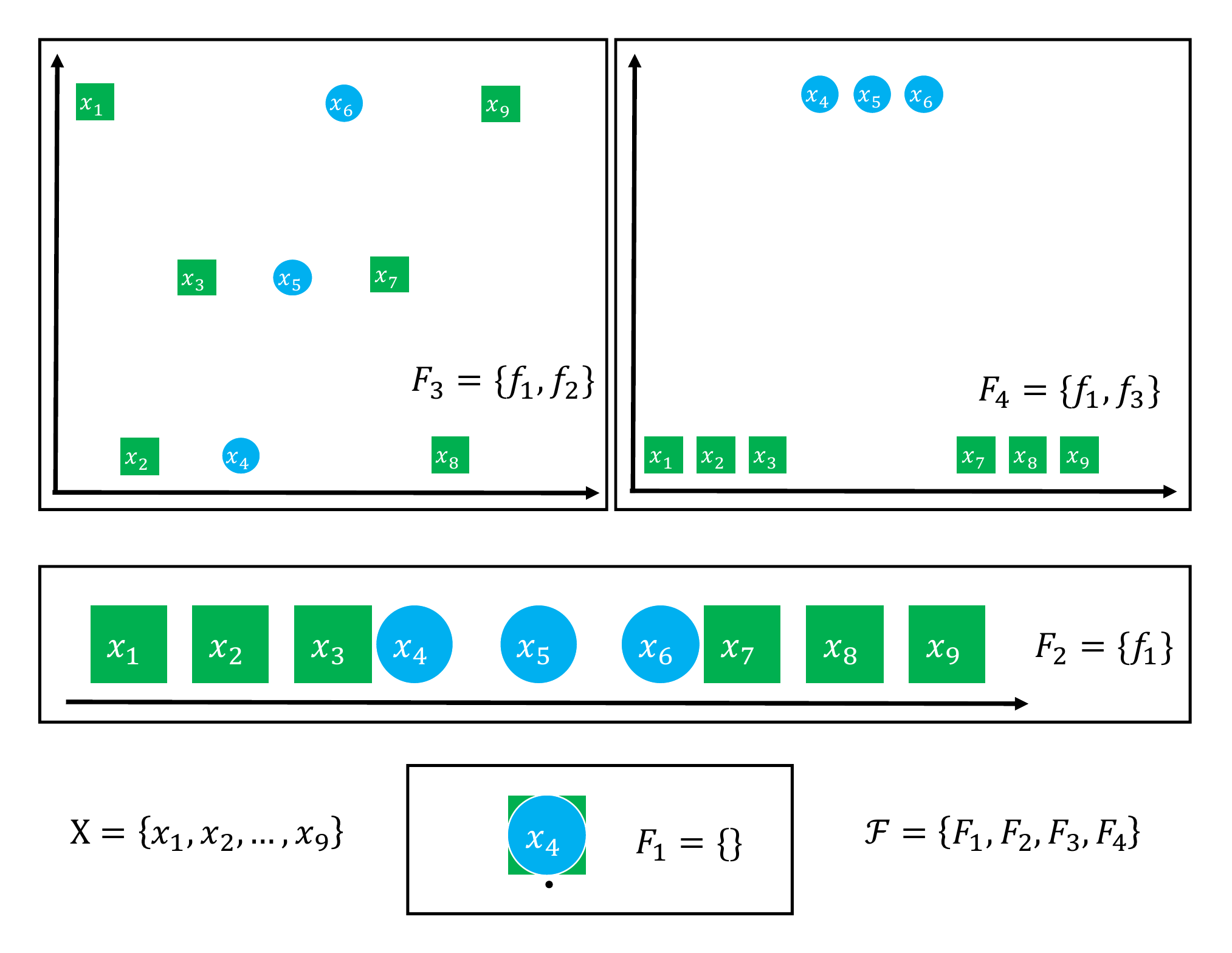}
\label{fig:TeachingEffort1}
}
\subfloat[][]
{
\includegraphics[width=2.72in]{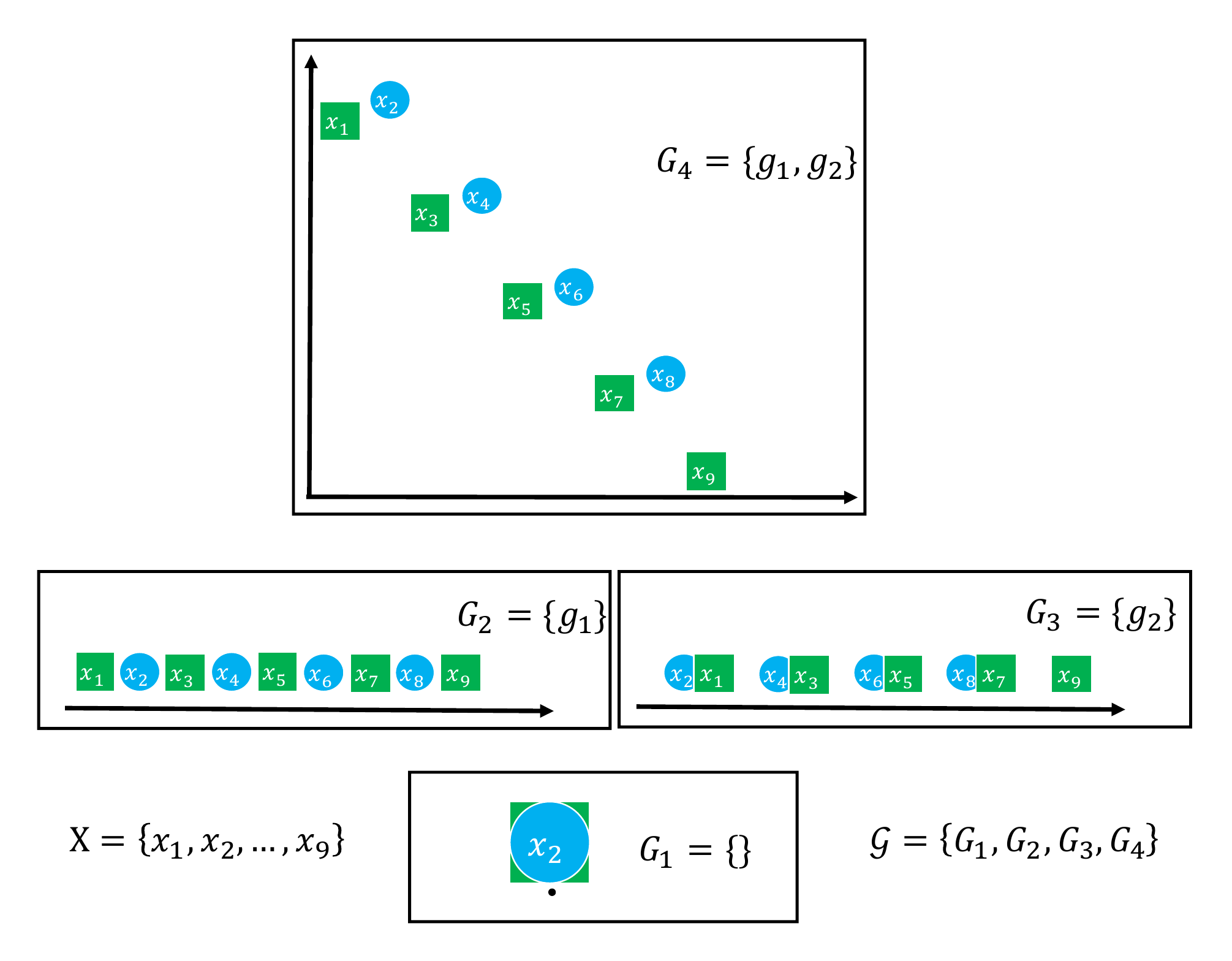}
\label{fig:TeachingEffort2}
}
\caption{Two example feature lattices each with four feature sets and nine objects. The shape and color of the objects denote the target binary classification. Each rectangular region is associated with a $d$-dimensional feature set and contains a plot of the objects in $\real^d$. The lowest rectangular region in each panel is associated with an empty feature set which maps all objects to the same point. We graphically depict this mapping by overlaying the circles on top of the rectangles.}
\label{fig:TeachingEffort}
\end{figure}

In order to define a learning algorithm we first define training sets and, because we are considering learning with alternative feature sets, featurized training sets. 
A training set $T\subset X\times Y$ is a set of labeled examples. We consider only honest training sets, that is, $T\subset X\times Y$ such that $\forall (x,y)\in T$ it is the case that $\target(x)=y$.
We say that the training set $T$ has $n$ examples if $|T|=n$ and denote the set of training examples as $\{(x_1,y_1),\ldots,(x_n,y_n)\}$. A training set is unfeaturized. We use feature sets to create featurized training sets.  For $p$-dimensional feature set $F_i$ and an $n$ example training set $T$ we denote the featurized training set $F_i(T)= \{(F_i (x_1 ),y_1 ),\ldots,(F_i (x_n ),y_n)\}\in \{\real^p\times Y\}^n$. We call the resulting training set an $F_i$ featurized training set or an $F_i$ featurization of training set $T$.

Now we are prepared to define a learning algorithm. First, a $d$-dimensional learning algorithm $\ell_d$ is a function that takes a $p$-dimensional feature set $F$ and a training set $T$ and outputs a function $h_p\in \real^p\rightarrow Y$. Thus, the output $h_p$ of a learning algorithm using $F_i$ and training set $T$ can be composed with the functions in the feature set to yield a classification function of objects (i.e., $h_p\circ F_i\in \scriptC$). The hypothesis space of a $d$-dimensional learning algorithm $\ell_d$ is the image of the function $\ell_d$ and is denoted by $H_{\ell_d}$ (or $H_d$ if there is no risk of confusion). A classification function $c\in \scriptC$ is {\em consistent with a training set} $T$ if $\forall (x,y)\in T$ it is the case that $c(x)=y$. A $d$-dimensional learning algorithm $\ell_d$ is {\em consistent} if the learning algorithm outputs a hypothesis consistent with the training set whenever there is a hypothesis in $H_d$ that is consistent with the training set. A {\em vector learning algorithm} $\ell=\{\ell_0,\ell_1,\ldots \}$ is a set of $d$-dimensional learning algorithms one for each dimensionality.  A {\em consistent} vector learning algorithm is one in which each of the $d$-dimensional learning algorithms is consistent. Finally, a {\em (feature-vector) learning algorithm} $\learner$ takes a feature set $F$, a training set $T$, and a vector learning algorithm $\ell$ and returns a classification function in $\scriptC$. In particular $\learner_\ell(F,T)=\ell_{|F|}(F,T)\circ F\in \scriptC$. When the vector learning algorithm is clear from context or we are discussing a generic vector learning algorithm we drop the $\ell$ and write $\learner(F,T)$.

One important property of a feature set is whether it is sufficient to teach the target classification function $\target$.
A feature feature set $F$ is {\em sufficient} for learner $\learner$ and target classification function $\target$ if there exists a training set $T$ such that $\learner(F,T)=\target$.

A natural desiderata of a learning algorithm is that adding a feature to a sufficient feature set should not make it impossible to teach a target classification function. We capture this with the following property of a learning algorithm.
We say that a learning algorithm $\learner$ is {\em monotonically sufficient} if it is the case that if $F$ is sufficient then $F'\supset F$ is sufficient. Many learning algorithms, in fact, have this property. 

We distinguish two type of training sets that are central to teaching.
First, a training set $T$ is a {\em concept teaching set} for feature set $F$ and learning algorithm $\learner$ if $\learner(F,T)=\target$. Second, a training set $T$ an {\em invalidation set} if there is an example $(x,y)\in T$ that is not correctly classified by $\learner(F_i,T)$.

The following proposition demonstrates that, for consistent learning algorithms, finding an invalidation set demonstrates that a feature set is not sufficient for the target classification function.

\begin{proposition}$\!\!\!$ \label{thm:insufficient}
If learning algorithm $\learner$ is consistent and $T$ is an invalidation set for feature set $F_i$, target concept $\target$, and $\learner$ then $F_i$ is not sufficient for $\target$ and $\learner$.
\end{proposition}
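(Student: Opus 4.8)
The plan is to argue by contradiction. Suppose, contrary to the claim, that $F_i$ \emph{is} sufficient for $\target$ and $\learner$. By the definition of sufficiency there is then some (honest) training set $T'$ with $\learner(F_i,T')=\target$. Unfolding the definition of a feature-vector learning algorithm, this means $\ell_{|F_i|}(F_i,T')\circ F_i=\target$, so the hypothesis $h:=\ell_{|F_i|}(F_i,T')\in H_{|F_i|}$ satisfies $h\circ F_i=\target$ as a classification function on $X$.

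The key step is to observe that this same $h$ is consistent with the \emph{featurized} version of the invalidation set $T$. Indeed, $T$ is a training set and hence honest, so for every $(x,y)\in T$ we have $\target(x)=y$; therefore $h(F_i(x))=(h\circ F_i)(x)=\target(x)=y$. By the definition of $F_i(T)$, this says precisely that $h$ is consistent with $F_i(T)$, so there exists a hypothesis in $H_{|F_i|}$ consistent with the featurized training set $F_i(T)$.

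Now invoke consistency: since $\learner$ is a consistent (feature-vector) learning algorithm, the $|F_i|$-dimensional learner $\ell_{|F_i|}$ is consistent, and so $\ell_{|F_i|}(F_i,T)$ must itself be consistent with $F_i(T)$. Composing with $F_i$, this means $\learner(F_i,T)=\ell_{|F_i|}(F_i,T)\circ F_i$ correctly classifies every example in $T$, i.e.\ $\learner(F_i,T)(x)=y$ for all $(x,y)\in T$. This directly contradicts the hypothesis that $T$ is an invalidation set for $F_i$, $\target$, and $\learner$, which asserts the existence of some $(x,y)\in T$ that is \emph{not} correctly classified by $\learner(F_i,T)$. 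Hence $F_i$ cannot be sufficient, completing the proof.

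The argument is essentially a routine unfolding of definitions; the one place that requires care is the second step, where honesty of the invalidation set is exactly what is needed to promote ``$h$ computes $\target$'' to ``$h$ is consistent with $F_i(T)$'' — without honesty the featurized labels need not agree with $h\circ F_i$ and the chain breaks. Everything else is bookkeeping: keeping straight the distinction between the object-level classification function $\learner(F_i,T)$ and the feature-space hypothesis $\ell_{|F_i|}(F_i,T)$, and applying the definition of consistency at the correct dimensionality $|F_i|$.
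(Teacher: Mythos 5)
Your proof is correct and follows essentially the same route as the paper's: assume sufficiency, note that the resulting hypothesis realizing $\target$ is consistent with the (honest) invalidation set, and derive a contradiction with the consistency of $\learner$. Your version merely spells out the bookkeeping between $\learner(F_i,T)$ and $\ell_{|F_i|}(F_i,T)$ that the paper leaves implicit.
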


Meek (2016) suggests that identifying minimal invalidation sets might be helpful for teachers wanting to identify mislabeling errors and representation errors. In this paper, an invalidation set is an indication of a representation errors because we assume that the labels in the training set are correct implying that there are no mislabeling errors.

In the remainder of the paper we consider two binary classification algorithms ($Y=\{0,1\}$). 
The first learning algorithm is a consistent one-nearest-neighbor learning algorithm $\learner_{1NN}$. 
Our one-nearest-neighbor algorithm is a set of $d$-dimensional one-nearest-neighbor learning algorithms that use a $d$-dimensional feature set to project the training set into $\real^d$. The algorithm identifies the set of closest points and outputs the minimal label value of points in that set. Thus, if there is more than one closest point and their labels disagree then the learned classification will output 0. By construction, this is a consistent learning algorithm.

The second learning algorithm is a linear learning algorithm $\learner_{lin}$.
Our consistent linear learning algorithm is a set of $d$-dimensional linear learning algorithms for which the decision surface is defined by a hyperplane in the $\real^d$ or, more formally, by $c(x|F,w,b)=sign(w\cdot F(x) + b)$  where the hyperplane is defined in terms of weights $(w,b)$. We consider the linear learner $\learner_{lin}$ that produces the maximum margin separating hyperplane for a training set when one exists and outputs the constant zero function otherwise. Note that the maximum margin separating hyperplane for a training set is the separating hyperplane that maximizes the minimum distance between points in the training set and the hyperplane. again, by construction, this is a consistent learning algorithm.

Note that we say that a feature set $F$ is linearly sufficient for the target classification function if $F$ is sufficient for the target classification function when using a consistent linear learning algorithm.

We finish this section with the following proposition that demonstrates our learning algorithms are both monotonically sufficient.
\begin{proposition} \label{thm:sufficiency}
The learning algorithms $\learner_{1NN}$ and $\learner_{lin}$ are monotonically sufficient.
\end{proposition}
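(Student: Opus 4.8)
Because ``monotonically sufficient'' only refers to supersets and the feature lattice grows one feature at a time, it is enough to show: if $F$ is sufficient (for the learner at hand and $\target$) then so is $F'=F\cup\{g\}$ for a single new feature $g$; the general statement follows by induction along a chain in $\scriptF$ from $F$ up to any larger feature set. So fix a training set $T$ with $\learner(F,T)=\target$ and produce a training set $T'$ with $\learner(F',T')=\target$. The common idea for both learners is that a witness for $F$ lifts to $F'$ once the new coordinate $g$ is made irrelevant, and the lifted witness is then realized by a finite honest training set drawn from the available objects.

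\textbf{Linear learner.} If $\learner_{lin}(F,T)=\target$ then $F(T)$ is separable and its maximum-margin separator $(\hat w,\hat b)$ has $\mathrm{sign}(\hat w\cdot F(x)+\hat b)=\target(x)$ for all $x\in X$; in particular no two objects with different targets collide under $F$, hence none collide under the refinement $F'$. Appending a zero weight for the new coordinate gives a hyperplane $((\hat w,0),\hat b)$ that still realizes $\target$ through $F'$, so $F'(X)$ carries a $\target$-realizing linear classifier. It then suffices to present a training set whose $F'$-featurization has a $\target$-realizing hyperplane as its maximum-margin separator. When the object pool $X$ is finite this is immediate: $T'=\{(x,\target(x)):x\in X\}$ is a finite honest training set, $F'(T')=F'(X)$ is (strictly) linearly separable, so by consistency $\learner_{lin}$ outputs the maximum-margin separator of $F'(T')$, and being a separator it classifies every $F'(x)$ as $\target(x)$; hence $\learner_{lin}(F',T')=\target$. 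In the general case one instead retains the support vectors of $(\hat w,\hat b)$ in $T'$ and anchors the hyperplane with enough additional labeled objects that the new coordinate cannot be used to enlarge the margin.

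\textbf{One-nearest-neighbor learner.} Here $\learner_{1NN}(F,T)=\target$ again forces $F$, hence $F'$, to separate objects of distinct target: colliding objects receive identical predictions from $\learner_{1NN}$. Taking $T'=\{(x,\target(x)):x\in X\}$ (legal when $X$ is finite), each object $x$ lies at $F'$-distance $0$ from its own entry of $T'$, and every other entry at distance $0$ from $F'(x)$ belongs to an object with the same $F'$-image and therefore the same target, so the minimal-label rule returns $\target(x)$ and $\learner_{1NN}(F',T')=\target$. Thus $F'$ is sufficient, as required.

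\textbf{The main obstacle.} The subtle point is that the original witness $T$ generally does \emph{not} certify $F'$: adding a coordinate changes the metric that $\learner_{1NN}$ uses and lets the maximum-margin hyperplane tilt into the new direction, so small explicit examples show either learner misclassifying objects outside $T$ under $F'$. A fresh witness must therefore be constructed, and the one construction that is completely routine, $T'=\{(x,\target(x)):x\in X\}$, presumes a finite object pool. I expect the real content of the proof to be either that pool-finiteness assumption or, lacking it, the careful lifting of the support-vector (respectively nearest-neighbor) witness so that the new feature is provably inert.
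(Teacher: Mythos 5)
Your proof is correct and takes essentially the same route as the paper's: a zero-padded weight vector shows the target remains realizable under $F'$ for $\learner_{lin}$, and the observation that extra features only refine the partition of objects handles $\learner_{1NN}$. You are in fact somewhat more careful than the paper, which stops at realizability of $\target$ in the enlarged representation, whereas you explicitly convert realizability into sufficiency by feeding the consistent learner the fully labeled finite pool.
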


\section{Teaching Patterns, Protocols and Costs} \label{sec:protocols}

In this section, we introduce our Error-Drive-Featuring (EDF) design pattern for teaching and two teaching protocols. We introduce the teaching protocols as a means to study the risks and benefits of our EDF teaching pattern. 

Teaching patterns are related to design patterns (Gamma et al 1995). \nocite{gamma1995design} Whereas design patterns for programming are formalized best practices that a programmer can use to design software solutions to common problems, a design pattern for teaching (or teaching pattern) is a formalized best practice that a teacher can use to teach a computer. 

We use a pair of teaching protocols to study the risks and benefits of our EDF teaching pattern. A teaching protocol is an algorithmic description of a method by which a teacher teaches a learner. In order to study a teaching pattern, in one protocol, we force the teacher to follow the teaching pattern and, in the other, we allow the teacher full control over their actions.  

We contrast our teaching protocols by comparing the optimal teaching costs and, in a subsequent section, bounds on optimal teaching costs. To facilitate the discussion of optimal teaching costs we next define several teaching costs associated with a feature set.

\subsection{Optimal Feature Set Teaching Costs}

Next we define a set of costs for a feature set.
The first measure is a measure of the cost of specifying the feature set. 
We measure the {\em representation specification cost} of a feature set $F$ by the cardinality of the feature set $|F_i|$. This idealized measure does not differentiate the effort required to specify features. In practice, different features might require different effort to specify and the cost to specify different features will depend upon the interface through which features are communicated to the learner.

The second measure of a feature set is a measure of the cost of specifying a target classification function using the feature set and a given learning algorithm. We measure the {\em optimal concept specification cost} by the size of the minimal concept teaching set for $F$ using learner $\learner$ if $F$ is sufficient and to be infinite otherwise. 

The third measure of a feature set is a measure of the cost of demonstrating that the feature set is not sufficient for a given learning algorithm. We measure the {\em optimal invalidation cost} of a feature set $F$ using learner $\learner$ by the size of the minimal invalidation set if $F$ is not sufficient and infinite otherwise.

We define the optimal feature set cost vector $FSCost(F,\learner)$ for a feature set $F$ and learning algorithm $\learner$. The feature set cost vector is of length three where the first component is the feature specification cost, the second component is the optimal concept specification cost and the third component is the optimal invalidation cost. 

Consider the feature set $F_2$ in Figure~\ref{fig:TeachingEffort1}.
The training set with three objects $T=\{x_2, x_5$, $x_8\}$ is a minimal concept teaching set for $F_2$ and a minimal invalidation set for $F_1$. Thus, we can now specify the optimal feature set costs for $F_2$: the representation specification cost is $|F_2|=1$, the optimal concept specification cost is $|T|=3$, the optimal invalidation cost is $\infty$ (i.e., $FSCost(F_2,\learner_{1NN})=(1,3,\infty)$). The optimal feature set cost vectors for other feature sets are shown in Table~\ref{tab:fscosts}.

\subsection{Analysis of Teaching Protocols}

\begin{figure*}[t]
\begin{tabular*}{\textwidth}{cc}
\begin{minipage}[t]{0.46\textwidth}
\begin{algorithm}[H]
\caption{Open-Featuring}
\label{alg:open}
\begin{algorithmic}
\STATE{\bf Input} learning algorithm $\learner$, set of objects $X$, a feature lattice $\scriptF$, and target classification function $\target$.
\STATE $T=\{\}$ \hspace{0.5in} // training set $T\subset X\times Y$
\STATE $F=\{\}$ \hspace{0.5in} // feature set $F\in \scriptF$
\STATE $c =\learner(T,F)$;
\WHILE {$\exists x\in X$ such that $c(x)\neq \target(x)$}
\STATE $Action =$ \underline{Choose-action}$(F,T,\learner)$;
\IF{(Action == "Add-feature")}
\STATE $PossFeat = \bigcup_i \{F_i \in \scriptF s.t. |F_i| = |F|+1\}\setminus F $
\STATE $f =$ \underline{Add-feature}(PossFeat);
\STATE $F = F \cup \{ f \}$
\STATE $c = \learner(T,F)$;
\ELSE 
\STATE $(x,y) =$ \underline {Add-example}($X,F,T,\learner$)
\STATE $T=T\cup (x,y)$;
\STATE $c=\learner(T,F)$;
\ENDIF
\ENDWHILE
\STATE return c;
\end{algorithmic}
\end{algorithm}
\end{minipage}
&\hspace{0.03\textwidth}
\begin{minipage}[t]{0.46\textwidth}
\begin{algorithm}[H]
\caption{Error-Driven-Featuring}
\label{alg:error}
\begin{algorithmic}
\STATE{\bf Input} learning algorithm $\learner$, set of objects $X$, a feature lattice $\scriptF$, and target classification function $\target$.
\STATE $T=\{\}$ \hspace{0.5in} // training set $T\subset X\times Y$
\STATE $F=\{\}$ \hspace{0.5in} // feature set $F\in \scriptF$
\STATE $c=\learner(T,F)$;
\WHILE {$\exists x\in X$ such that $c(x)\neq \target(x)$}
\STATE $(x,y) =$ \underline{ Add-example}($X,F,T,\learner$);
\STATE $T=T\cup (x,y)$;
\STATE $c=\learner(T,F)$;
\WHILE{($\exists (x,y)\in T$ such that $c(x)\neq y$)}
\STATE $PossFeat = \bigcup_i \{F_i \in \scriptF s.t. |F_i| = |F|+1\}\setminus F $
\STATE $f =$ \underline{Add-feature}(PossFeat);
\STATE $F = F \cup \{ f \}$
\STATE c = $\learner(T,F)$;
\ENDWHILE
\ENDWHILE
\STATE return c;
\end{algorithmic}
\end{algorithm}
\end{minipage}
\end{tabular*}
\caption{Algorithms for two teaching protocols; Open-Featuring and Error-Driven-Featuring.}
\label{fig:teachingprotocols}
\end{figure*}

Figure~\ref{fig:teachingprotocols} describes two teaching protocols. In Algorithm~\ref{alg:open}, the teacher is able to choose whether to add a feature or to add a labeled example. Because the teacher can choose when to add a feature and when to add a labeled example (i.e., the teacher implements the Choose-action function) we call this teaching protocol the Open-Featuring protocol. 
When adding a feature (the Add-feature function), the teacher selects one of the features that can be taught given the feature lattice $\scriptF$ and the teaching protocol adds the feature to the current feature set and retrain the current classifier. When adding a label (the Add-example function), the teacher chooses which labeled example to add to the current training set and the teaching protocol adds the example to the training set and retrains the current classifier.

In Algorithm~\ref{alg:error}, the teacher can only add a feature if there is a prediction error in the training set. From Proposition~\ref{thm:insufficient}, if we are using a consistent learner we know that this implies that the feature set is not sufficient and indicates the need to add additional features. Note this assumes that the teacher provides correct labels. For a related but alternative teaching protocol that allows for mislabeling errors see Meek (2016). \nocite{Meek2016errors} In this protocol, if the current feature set is not sufficient, a teacher adds labeled examples to find an invalidation set which then enables them to add a feature to improve the feature representation. This process of creating invalidation sets continues until a sufficient feature set is identified. An ideal teacher under this protocol would want to minimize the effort to invalidate feature sets that are not sufficient. The cost of doing this for a particular feature set can be measured by the invalidation cost. There is a possibility that one can reuse examples from the invalidation sets of previously visited smaller feature sets, but the sum of the invalidation costs along paths in the feature lattice provides an upper bound on the cost of discovering sufficient feature sets.

Given these two protocols is natural to compare costs by the number of features added and the number of labeled examples that are added in defining the classifier. We can then associate a {\em teaching cost} with each feature set in the feature lattice $\scriptF$. The teaching cost is also a function of the learning algorithm, and the featuring protocol (Open or Error-driven). The optimal teaching costs for $\learner_{lin}$ and $\learner_{1NN}$ for different feature sets is given in Table~\ref{tab:teachingcosts}. An infinite label cost indicates that the feature set cannot be used to teach the target classification function using that protocol and learning algorithm. 
Because our teaching cost has two components, we would need to choose method to combine these two quantities in order to discuss optimal teaching policies. Once the teacher has provided the learner a feature set that is sufficient the teacher needs to teach the concept represented by the $\target$ classification function. The labeling cost required to do this is captured by the concept specification cost.

\begin{table}[ht]
\centering
\subfloat[Subtable 1 list of tables text][Feature set costs for using $\learner_{1NN}$ and $\learner_{lin}$.]
{
\begin{tabular}{c|cc}\label{tab:fscosts}

Feat. Set & $\learner_{1NN}$ & $\learner_{lin}$\\
\hline
$F_1$ & (0,$\infty$,2) & (0,$\infty$,2) \\
$F_2$ & (1,3,$\infty$) & (1,$\infty$,3) \\
$F_3$ & (2,7,$\infty$) & (2,$\infty$,3) \\
$F_4$ & (2,2,$\infty$) & (2,2,$\infty$) \\
\hline
$G_1$ & (0,$\infty$,2) & (0,$\infty$,2) \\
$G_2$ & (1,9,$\infty$) & (1,$\infty$,3) \\
$G_3$ & (1,8,$\infty$) & (1,$\infty$,3) \\
$G_4$ & (2,9,$\infty$) & (2,3,$\infty$) \\
\end{tabular}
}
\qquad
\subfloat[Subtable 1 list of tables text][Optimal teaching costs using $\learner_{1NN}$ and $\learner_{lin}$ with the Open-Featuring and Error-Driven-Featuring  protocols.]
{
\begin{tabular}{c|cc|cc}
\label{tab:teachingcosts}
& \multicolumn{2}{|c|}{Open}& \multicolumn{2}{c}{Error-Driven}\\

Feat. Set & $\learner_{1NN}$ & $\learner_{lin}$& $\learner_{1NN}$ & $\learner_{lin}$\\
\hline
$F_1$ & (0,$\infty$) & (0,$\infty$) & (0,$\infty$) & (0,$\infty$) \\
$F_2$ & (1,3) & (1,$\infty$)& (1,3) & (1,$\infty$)  \\
$F_3$ & (2,7) & (2,$\infty$)& (2,7) & (2,$\infty$)  \\
$F_4$ & (2,2) & (2,3)& (2,2) & (2,3)  \\
\hline
$G_1$ & (0,$\infty$) & (0,$\infty$) & (0,$\infty$,) & (0,$\infty$)  \\
$G_2$ & (1,9) & (1,$\infty$) & (1,9) & (1,$\infty$) \\
$G_3$ & (1,8) & (1,$\infty$) & (1,8) & (1,$\infty$) \\
$G_4$ & (2,9) & (2,2) & (2,$\infty$) & (2,3) \\
\end{tabular}
}
\caption{Optimal feature set costs and optimal teaching costs for all of the feature sets from Figure~\ref{fig:TeachingEffort}.}
\label{tab:costs}
\end{table} 

The Open-Featuring protocol affords the teacher more flexibility than the Error-Driven-Featuring protocol. In particular, assuming that the teacher is an ideal teacher then there would be no reason to prefer the Error-Driven-Featuring protocol. If, however, the teacher is not an ideal teacher, one not always able to identify features that improve the representations or one who benefits from inspecting an invalidation set to identify features, then one might prefer the Error-Driven-Featuring protocol. In particular, this is a possibility that adding a poor feature can increase the labeling cost.  For instance, when using $\learner_{1NN}$,  a poor teacher who has taught the learner to use feature $f_1$ might add feature $f_2$ rather than feature $f_3$ significantly increasing the concept specification cost. In the next section we demonstrate that there is, in fact, unbounded risk for $\learner_{1NN}$. 

One of the short-comings of the Error-Drive-Featuring protocol is that, once the feature set is sufficient the teacher cannot add another feature.  For instance, for the example in Figure~\ref{fig:TeachingEffort1}, $F_3$ and $F_4$ are inaccessible. This might mean that representations that have lower concept specification costs cannot be used to teach $\target$. For instance, $F_4$ has a concept specification cost of 2 whereas the concept specification cost of $F_2$ is 3. While this difference is not large, it is easy to create an example where the costs differ significantly. In contrast, using the Open-Featuring protocol, a teacher can choose to teach either $F_2$ or $F_4$ trading of the costs of adding features and concept specification (adding labels).

The use of the Error-Driven-Featuring protocol can mitigate the risk of poor featuring but, as discussed above, does come with potential costs.  An alternative approach to mitigating the risks of featuring is to use a different learning algorithm. If we use $\learner_{lin}$, the potential for a increasing the cost for concept specification is when adding a feature is significantly limited. This is discussed in more detail in the next section. 

\section{Bounding Optimal Teaching Cost and Feature Set Costs}

In this section, we provide bounds on the optimal feature set teaching costs and optimal teaching costs for $\learner_{lin}$ and $\learner_{1NN}$ with the teaching protocols defined in Section~\ref{sec:protocols}. In this section, we assume that there is a finite set of realizable objects (i.e., $|X|<\infty$). 

\subsection{Bounding Optimal Feature Set Costs}

We provide a set of propositions each of which provides tight bounds for optimal concept specification costs and optimal invalidation costs for $\learner_{lin}$ and $\learner_{1NN}$. These propositions are presented in Table~\ref{tab:props} with their full statements with proofs presented in the full paper.

The fact that the optimal concept specification cost is unbounded as a function of the size of the feature set for $\learner_{1NN}$ is due to the fact that the 1NN classifier is of high capacity. Proposition~\ref{thm:LinConceptSpecBound}, however, bounds the potential increase in effort required to define the concept when adding a feature for $\learner_{lin}$. It is important to note that optimal concept specification cost for $\learner_{lin}$ can be just two labeled objects but not in general. In fact, one can construct for $d>1$, a set of objects and a feature set of size $d$ that requires $d+1$ objects to specify a linear hyperplane that generalizes to all of the objects.

Similar to the bound on the optimal concept specification cost, the bound optimal invalidation cost for $\learner_{lin}$ (Proposition~\ref{thm:LinInvalidBound}) is tight. This can be demonstrated by constructing, for $d\geq0$ a set of $d+2$ labeled objects in $\real^d$ such that any subset of the labeled objects is linearly separable. While Proposition~\ref{thm:LinInvalidBound} does provide a bound on the invalidation cost $\learner_{1NN}$, this bound for $\learner_{lin}$ is larger than that provided by Proposition~\ref{thm:1NNInvalidBound}. We suspect, however, that in practice, the invalidation cost for the linear classifier would typically be far less then $d+2$ for non-trivial $d$.

\begin{table}[ht]
\centering
\begin{tabular}{c|cc}
Algorithm & Concept Spec. Cost & Invalidation Cost  \\
\hline
$\learner_{lin}$ & $\leq |F|+1$ (Proposition~\ref{thm:LinConceptSpecBound}) & $\leq |F|+2$ (Proposition~\ref{thm:LinInvalidBound}) \\
$\learner_{1NN}$ & unbounded (Proposition~\ref{thm:1NNexplosion}) & =2 (Proposition~\ref{thm:1NNInvalidBound})  \\
\end{tabular}

\caption{Summary of propositions bounding the optimal invalidation cost and optimal concept specification cost for a feature set $F$ using $\learner_{lin}$ and $\learner_{1NN}$.}
\label{tab:props}
\end{table} 

\subsection{Bounding Teaching Costs}

In this section we consider bounding the cost of teaching a target classification function $\target$ using learning algorithms $\learner_{1NN}$ and $\learner_{lin}$. 

First we consider $\learner_{1NN}$.
Due to Proposition~\ref{thm:1NNexplosion}, we cannot bound the risk of adding a bad feature and thus cannot bound the teaching costs for our teaching protocols. We can, however, provide bounds for our teaching protocols using $\learner_{lin}$. The following proposition provides and upper bound on the teaching cost for a feature set.

\begin{proposition} \label{thm:opencost}
The labeling cost for a sufficient feature set $F$ using an optimal teacher and the Open-Featuring protocol with learning algorithm $\learner_{lin}$ is $\leq |F|+1$.
\end{proposition}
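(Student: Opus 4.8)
The plan is to exhibit a single teacher strategy under the Open-Featuring protocol whose labeling cost is at most $|F|+1$; since the proposition concerns the \emph{optimal} teacher, one such strategy suffices. The strategy postpones all labels: the teacher first issues only Add-feature actions, growing the feature set from $\emptyset$ up to $F$ along a chain of the lattice, and only afterwards issues Add-example actions, supplying the examples of a minimal concept teaching set for $F$ one at a time.

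I would first justify the featuring phase. We may take $F\in\scriptF$, since a teaching cost is associated only with feature sets in the lattice, and the defining property of a feature lattice then yields, by induction on $|F|$, a chain $\emptyset=F_0\subset F_1\subset\cdots\subset F_{|F|}=F$ with each $F_i\in\scriptF$ and $|F_i|=i$. At stage $i$ the feature extending $F_{i-1}$ to $F_i$ is an admissible choice in the protocol's Add-feature step, so the teacher can walk up this chain; and because the Choose-action step is entirely under the teacher's control, the teacher is never forced to add a feature, which is exactly the freedom that the Error-Driven-Featuring protocol lacks. Throughout this phase the training set remains empty, so no labels are spent. If at some stage the protocol's current classifier already agrees with $\target$ on all of $X$ the outer loop terminates at once, with zero labels spent; otherwise the teacher reaches the feature set $F$ having added no examples.

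I would then treat the labeling phase and conclude. Since $F$ is sufficient for $\target$ under $\learner_{lin}$, fix a minimal concept teaching set $T^\star$ for $F$, so that $|T^\star|$ equals the optimal concept specification cost of $F$ using $\learner_{lin}$ and $\learner_{lin}(F,T^\star)=\target$. Holding the feature set at $F$, the teacher adds the elements of $T^\star$ one at a time via Add-example; once all are present the classifier equals $\learner_{lin}(F,T^\star)=\target$, so the outer loop terminates, possibly sooner, which only lowers the count. Hence this run adds at most $|T^\star|$ labeled examples, so the optimal labeling cost for $F$ under Open-Featuring with $\learner_{lin}$ is at most the optimal concept specification cost of $F$ using $\learner_{lin}$, which by Proposition~\ref{thm:LinConceptSpecBound} is at most $|F|+1$.

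The argument is short: essentially all of the work has already been discharged inside Proposition~\ref{thm:LinConceptSpecBound}, and the only point needing care is the protocol's control flow — that the outer loop does not get stuck and terminates no later than when $T^\star$ has been exhausted — together with the lattice-reachability of $F$. Both are routine, so I do not expect a genuine obstacle here; the substantive difficulty lives one proposition earlier.
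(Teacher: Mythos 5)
Your argument is correct and is essentially the paper's proof, which simply states that the bound ``follows immediately from Proposition~\ref{thm:LinConceptSpecBound}''; you have just made explicit the routine protocol bookkeeping (adding the features first, then a minimal concept teaching set) that the paper leaves implicit. No gap here.
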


For the Error-driven-featuring protocol the computation of cost is more difficult as we need to account for the cost of invalidating feature sets. Proposition~\ref{thm:reuse} demonstrates a  useful connection between the invalidation sets for nested feature sets when using a linear classifier.

\begin{proposition}\label{thm:reuse}
If $T$ is an invalidation set for $F$, target classification function $\target$ and a consistent linear learner then $T$ is an invalidation set for $F'\subset F$.
\end{proposition}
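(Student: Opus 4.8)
The plan is to reduce the proposition to one geometric fact: linear separability of a finite labeled point set is inherited when we embed it into a higher‑dimensional coordinate space, and therefore \emph{non}‑separability is inherited when we project onto a subset of the coordinates. The argument has three steps: restate ``invalidation set'' for $\learner_{lin}$ as non‑separability, show that non‑separability descends to sub‑feature‑sets, and translate back.

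\emph{Step 1 (characterization).} For a feature set $G$ write $G(T)=\{(G(x),y):(x,y)\in T\}$. Call $(w,b)$ with $w\in\real^{|G|}$ a \emph{separator} for $G(T)$ if $w\cdot G(x)+b>0$ whenever $(x,y)\in T$ with $y=1$ and $w\cdot G(x)+b<0$ whenever $y=0$. I would show that $T$ is an invalidation set for $G$ (using the consistent linear learner) if and only if $G(T)$ has no separator. If a separator exists then, the point set being finite, a maximum‑margin separating hyperplane exists, so by construction $\learner_{lin}(G,T)$ returns it and it classifies all of $T$ correctly, so $T$ is not an invalidation set. If no separator exists then $\learner_{lin}(G,T)$ is the constant‑zero function; this is correct on all of $T$ only if every label in $T$ is equal, but then $(w,b)=(0,\pm1)$ with the appropriate sign is a separator for $G(T)$, a contradiction — so some example of $T$ is misclassified and $T$ is an invalidation set. (The only mildly delicate points of the whole proof live here: fixing how $sign$ treats the boundary, and recalling that for a finite set the existence of a separator yields a positive‑margin, hence maximum‑margin, separating hyperplane.)

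\emph{Step 2 (lifting).} Since $F'\subset F$, sending $F(x)$ to $F'(x)$ is the coordinate projection $\real^{|F|}\to\real^{|F'|}$ that deletes the coordinates indexed by $F\setminus F'$. If $(w',b')$ is a separator for $F'(T)$, define $w\in\real^{|F|}$ by placing the entries of $w'$ on the coordinates indexed by $F'$ and $0$ on the coordinates indexed by $F\setminus F'$; then $w\cdot F(x)+b'=w'\cdot F'(x)+b'$ for every object $x$, so $(w,b')$ is a separator for $F(T)$. Equivalently: if $F(T)$ has no separator, then $F'(T)$ has no separator.

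\emph{Step 3 (assembly).} Suppose $T$ is an invalidation set for $F$. By Step 1, $F(T)$ has no separator; by Step 2, $F'(T)$ has no separator; by Step 1 again, $T$ is an invalidation set for $F'$. I expect essentially all of the (small amount of) work to sit in Step 1 — handling the constant‑zero fallback and the all‑equal‑labels degenerate case — while Steps 2 and 3 are routine linear algebra. It is worth noting that the argument uses only the set inclusion $F'\subset F$ and nothing about the feature lattice $\scriptF$.
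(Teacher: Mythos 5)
Your proof is correct and follows essentially the same route as the paper's: argue by contraposition that a separator for $F'(T)$ lifts to a separator for $F(T)$ by zero-padding the weight vector on the coordinates in $F\setminus F'$. Your Step 1 is in fact more careful than the paper's version, which implicitly identifies ``not an invalidation set'' with ``a separating $(w',b')$ exists'' without addressing the constant-zero fallback or the all-one-label degenerate case that you handle explicitly.
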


Finally, the following proposition provides an upper bound on the teaching cost for a feature set for the learning algorithm $\learner_{lin}$.

\begin{proposition} \label{thm:errordrivencost}
The labeling cost for a minimal sufficient feature set $F$ using an optimal teacher and the Error-Driven-Featuring protocol with learning algorithm $\learner_{lin}$ is $\leq 2 (|F|+1)$.
\end{proposition}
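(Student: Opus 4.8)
The plan is to give an explicit two-phase strategy for an optimal teacher under Error-Driven-Featuring: a \emph{featuring phase} in which a single, cleverly chosen invalidation set drives the addition of every feature of $F$, and a \emph{concept phase} in which a bounded number of further labels pin down $\target$. Write $k = |F|$. Using the defining property of feature lattices, fix a chain $\emptyset = F_0 \subset F_1 \subset \cdots \subset F_k = F$ with $F_i \in \scriptF$ and $|F_i| = i$. Since $F$ is minimal sufficient, each $F_i$ with $i < k$ is a proper subset of $F$ and hence not sufficient, so $\learner_{lin}(F_i, T) \neq \target$ for every training set $T$; in particular the outer loop of the protocol cannot terminate while the current feature set is some $F_i$ with $i < k$.

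For the featuring phase, let $T_{inv}$ be a minimal invalidation set for $F_{k-1}$, so $|T_{inv}| \le (k-1) + 2 = k+1$ by Proposition~\ref{thm:LinInvalidBound}. By Proposition~\ref{thm:reuse}, $T_{inv}$ is an invalidation set for every $F_i$ with $i \le k-1$ ($F_{k-1}$ by construction, the smaller $F_i$ because $F_i \subset F_{k-1}$), and it remains one after more honest examples are added, since for the linear learner an invalidation set witnesses non-separability of the featurized training set and non-separability persists under enlargement. The teacher runs the protocol, feeding in the elements of $T_{inv}$ one at a time via Add-example and, each time the inner loop fires, adding the next feature of the chain via Add-feature. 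I would then verify by tracing the protocol that the feature set climbs along the chain and stops exactly at $F$: it never overshoots, because once the feature set equals $F$ the set $F(X)$, hence $F(T)$ for any honest training set $T$, is linearly separable, so $\learner_{lin}$ is consistent and the inner loop exits; and it cannot stall at an $F_i$ with $i < k$ once all of $T_{inv}$ has been added, because $T_{inv}$ invalidates $F_i$. Hence after at most $|T_{inv}| \le k+1$ added labels the protocol reaches a state with feature set $F$.

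For the concept phase I would invoke the structure behind Proposition~\ref{thm:LinConceptSpecBound}: since $F$ is sufficient, $F(X)$ is linearly separable, its maximum-margin separator $H_{\target}$ induces $\target$, and $H_{\target}$ is determined by a set $S$ of at most $k+1$ of its support vectors, so $S$ is a concept teaching set for $F$. Because every point of $F(X)$, and in particular every honest example, lies at distance at least the margin from $H_{\target}$, adjoining any honest training set to $S$ leaves $H_{\target}$ as the unique maximum-margin separator; hence $T_{inv} \cup S$ is also a concept teaching set. The teacher therefore continues by adding the at most $k+1$ elements of $S$; since $F$ is sufficient no further features are triggered, and the outer loop terminates by the time the training set contains $T_{inv} \cup S$, at which point $\learner_{lin}(F, \cdot) = \target$. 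In total the teacher adds at most $|T_{inv}| + |S| \le (k+1) + (k+1) = 2(|F|+1)$ labels.

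The step I expect to be the main obstacle is the concept phase: one must show that the labels forced during the featuring phase do not raise the remaining labeling cost. The subtlety is the margin-maximization-specific phenomenon that adding correctly labeled points can shrink the margin and change the learned hyperplane; the remedy is to take the concept-phase examples to be a determining set of support vectors of $H_{\target}$ and to exploit that all honest examples lie outside its margin, so $H_{\target}$ stays optimal. A secondary, more bookkeeping-heavy point is the trace of the featuring phase confirming that the feature set reaches exactly $F$, which rests on the non-sufficiency of each intermediate $F_i$ together with Proposition~\ref{thm:reuse}.
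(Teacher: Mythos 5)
Your proposal is correct and follows essentially the same two-phase argument as the paper: a single invalidation set for the penultimate feature set $F_{k-1}$ of size at most $|F|+1$ (Proposition~\ref{thm:LinInvalidBound}), pushed down the chain by Proposition~\ref{thm:reuse} to drive all the feature additions, followed by at most $|F|+1$ further labels for concept specification via Proposition~\ref{thm:LinConceptSpecBound}. You are in fact somewhat more careful than the paper, which silently assumes the featuring-phase labels do not interfere with the concept phase; your observation that the honest invalidation examples lie outside the margin of the target max-margin hyperplane, so that it remains the learner's output on the combined training set, fills that gap.
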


\newpage

\bibliography{TeachingEffort}
\bibliographystyle{aaai}

\newpage
\section{Appendix}
In this section we provide proofs for Propositions. Several proofs  rely on convex geometry and we assume that the reader is familiar with basic concepts and elementary results from convex geometry. We denote the convex closure of a set of points by $conv(S)$. 

\noindent {\bf Proposition \ref{thm:insufficient}} {\em
If learning algorithm $\learner$ is consistent and $T$ is an invalidation set for feature set $F_i$, target concept $\target$, and $\learner$ then $F_i$ is not sufficient for $\target$ and $\learner$.

\begin{proof}
Let $T$ be an invalidation set for $F_i$, target concept $\target$ and consistent learning algorithm $\learner$. Aiming for a contradiction, we assume that $F_i$ is sufficient for $\target$ and $\learner$.
From the fact that $F_i$ is sufficient for target concept $\target$ and learning algorithm $\learner$ then there exists a training set $T'$ such that $\learner(F_i,T')=\target$. This implies that there is a classification function in the hypothesis class of the learning algorithm that is consistent with any (honest) training set including $T$. This fact and the fact that $T$ is an invalidation set implies $\learner$ is not consistent and we have a contradiction. It follows that $F_i$ is not sufficient.
\end{proof}

\noindent {\bf Proposition \ref{thm:sufficiency}} {\em
The learning algorithms $\learner_{1NN}$ and $\learner_{lin}$ are monotonically sufficient.
}

\begin{proof}
For $\learner_{1NN}$ we simply node that adding features makes more distinctions between objects thus once sufficient any superset will remain sufficient.

For $\learner_{lin}$, let $d$-dimensional feature set $F$ be sufficient for the target classification function. This means that $\exists (w,b)$ for $w\in \real^d$ and $b\in \real$ such that $\target(x)=sign(w \cdot F(x) + b)$. For $F'\supset F$ if we use an offset $b'=b$ and a weight vector $w'$ this agrees with $w$ for any feature $f\in F$ and is zero otherwise is equivalent to the classifier defined by $(w,b)$ (i.e., $sign(w'\cdot F'(x)+b) = sign(w \cdot F(x) + b)$) which proves the claim.
\end{proof}

\begin{lemma} \label{lem:margin}
If finite sets $S,T\subset \real^d$ that are strictly separable then there exists a subset $U\subseteq S \cup T$  such that $|U|\leq d+1$ and the maximum margin separating hyperplane defined by $U\cap S$ and $U\cap T$ separates $S$ and $T$.
\end{lemma}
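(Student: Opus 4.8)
The plan is to realize the maximum margin hyperplane through the unique closest pair of points between $conv(S)$ and $conv(T)$, and then to extract a set $U\subseteq S\cup T$ of at most $d+1$ ``support'' points that reproduces that closest pair \emph{exactly}, so that the maximum margin hyperplane of $U\cap S$ versus $U\cap T$ is literally the same hyperplane and therefore still separates $S$ and $T$.

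First I would set up the closest-pair picture. Since $S,T$ are finite and strictly separable, $conv(S)$ and $conv(T)$ are disjoint compact convex sets, so there is a closest pair $(p^*,q^*)\in conv(S)\times conv(T)$; put $v^*=p^*-q^*\neq 0$, $\hat v=v^*/\|v^*\|$, $c_+=\langle\hat v,p^*\rangle$, $c_-=\langle\hat v,q^*\rangle$, so $c_+-c_-=\|v^*\|>0$. A first-order perturbation (slide $q^*$ toward an arbitrary $y\in T$, slide $p^*$ toward an arbitrary $x\in S$, and differentiate $\|p-q\|^2$ at the minimum) shows $\langle\hat v,y\rangle\le c_-$ for every $y\in T$, $\langle\hat v,x\rangle\ge c_+$ for every $x\in S$, and that $v^*$ is the nearest point of $conv(S)-conv(T)$ to the origin. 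These give the standard description of the maximum margin separating hyperplane of any strictly separable finite pair $A,B$: its normal is the normalization of the nearest point of $conv(A)-conv(B)$ to the origin, and it lies halfway between the two parallel supporting hyperplanes it touches, with margin half that gap. For $(S,T)$ this hyperplane is $\{x:\langle\hat v,x\rangle=(c_++c_-)/2\}$.

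Next I would sparsify. Let $A_S=\{x\in S:\langle\hat v,x\rangle=c_+\}$ and $A_T=\{y\in T:\langle\hat v,y\rangle=c_-\}$; by the inequalities above, in any convex representation of $p^*$ over $S$ only points of $A_S$ can carry positive weight, so $p^*\in conv(A_S)$, and likewise $q^*\in conv(A_T)$. Since $conv(A_S)$ lies in the hyperplane $\{\langle\hat v,\cdot\rangle=c_+\}$ and $conv(A_T)$ in the parallel one $\{\langle\hat v,\cdot\rangle=c_-\}$, the orthogonal projection $\pi$ onto $\hat v^{\perp}\cong\real^{d-1}$ sends $p^*$ and $q^*$ to a common point $r$ (because $p^*-q^*\parallel\hat v$), so $r\in conv(\pi(A_S))\cap conv(\pi(A_T))$. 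Now consider the polytope $\Pi$ of pairs $(\lambda,\mu)$ of convex weight vectors on $A_S$ and on $A_T$ with $\sum_x\lambda_x\pi(x)=\sum_y\mu_y\pi(y)$: it is non-empty and bounded (the weights realizing $p^*$ and $q^*$ lie in it), and it is cut out by only $1+1+(d-1)=d+1$ affine equalities together with sign constraints, so it has a vertex $(\lambda^0,\mu^0)$ with at most $d+1$ positive coordinates in total. Let $U$ collect the $x$ with $\lambda^0_x>0$ and the $y$ with $\mu^0_y>0$; then $|U|\le d+1$, and since strict separability forces $S\cap T=\emptyset$ we get $U\cap S\subseteq A_S$ and $U\cap T\subseteq A_T$. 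Writing $p^0=\sum_x\lambda^0_x x$ and $q^0=\sum_y\mu^0_y y$, we have $\pi(p^0)=\pi(q^0)$ while $\langle\hat v,p^0\rangle=c_+$ and $\langle\hat v,q^0\rangle=c_-$, so $p^0-q^0=v^*$; hence $v^*$ is still the nearest point of $conv(U\cap S)-conv(U\cap T)$ to the origin, and $conv(U\cap S)$, $conv(U\cap T)$ sit in the same two supporting hyperplanes $\{\langle\hat v,\cdot\rangle=c_\pm\}$. By the description from the first step, the maximum margin hyperplane of $U\cap S$ versus $U\cap T$ is again $\{x:\langle\hat v,x\rangle=(c_++c_-)/2\}$, which separates $S$ and $T$.

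The main obstacle is getting the count down to $d+1$ rather than $d+2$: applying Carath\'eodory to $p^*$ and to $q^*$ separately, or taking a vertex of the polytope of pairs at distance exactly $\|v^*\|$, only yields $d+2$ support points. Projecting out the direction $\hat v$ before taking a vertex is precisely what removes the one extra degree of freedom, and the delicate part of the write-up is verifying that the sparsified, projected weights still reassemble the \emph{original} pair $(p^*,q^*)$ exactly (through the identity $p^0-q^0=v^*$), not merely some other pair at the minimum distance; the rest is routine convex geometry. The degenerate case $d=0$ needs only a one-line remark: it is vacuous, since no two non-empty subsets of $\real^0$ are strictly separable.
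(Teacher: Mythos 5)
Your proof is correct, and it reaches the bound by a genuinely different route than the paper. The paper also starts from the closest pair of $conv(S)$ and $conv(T)$, but its sparsification is face-theoretic: it selects a closest pair $(s,t)$ whose carrying faces have minimal summed dimension, shows by a perturbation argument (sliding both points along a common direction of variation) that $\dim(s,S)+\dim(t,T)\le d-1$, and then applies Carath\'eodory separately inside each face to get $(d_s+1)+(d_t+1)\le d+1$ points. You instead characterize the active sets $A_S$, $A_T$ lying on the two supporting hyperplanes, project onto $\hat v^{\perp}$, and take a vertex of the coupled weight polytope cut out by $d+1$ equalities, obtaining the $d+1$ support points in one joint linear-programming step. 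Both arguments cash in the same one-dimensional saving --- namely that the closest-pair direction is orthogonal to both supporting hyperplanes --- but you spend it by dropping to $\real^{d-1}$ before sparsifying, while the paper spends it as the bound $d_s+d_t\le d-1$ on face dimensions. Your version has two advantages: the basic-feasible-solution step is fully rigorous where the paper's ``ball of variation'' perturbation is informal (and implicitly uses that both faces lie in $\hat v^{\perp}$ without saying so), and you explicitly verify via $p^0-q^0=v^*$ that the reduced sets reproduce the \emph{same} maximum margin hyperplane, a point the paper leaves implicit. The paper's approach is more geometric and arguably shorter once the face machinery is granted.
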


\begin{proof}
We define the set of points that are the closest points in the convex closure of $S$ and $T$ (i.e., $CP(S,T)=\{(s,t)| s\in conv(S), t\in conv(T), \forall s'  \in conv(S) \forall t'  \in conv(T) dist(s,t)\leq dist(s',t' )\}$). The maximum margin hyperplane defined by any two points $(s,t)\in CP(S,T)$ suffice to define a hyperplane that separate $S,T$ (see, e.g., Liu et al 2016). Consider a pair $(s,t)\in CP(S,T)$. Due the the construction of the set it must be the case that $s$ belongs to some face of $conv(S)$ and similarly $t$ belongs to some face of $conv(T)$. In fact, the points are a subset of the Cartesian product a face of $conv(S)$ and a face of $conv(T)$ that share one or more points that are equidistant. 

Next we choose a subset of $CP(S,T)$ on the basis of the faces to which each of the pair of points belongs.
Let $dim⁡(x,U)$ be Euclidean dimension of the minimal face of $conv(U)$ containing $x$ or be $\infty$ if $x$ is not in a face of $conv(U)$. We define the minimal closest pairs (a subset of $CP(S,T)$) to be pairs whose summed face Euclidean dimension is minimal (i.e, $MinCP(S,T)=\{(s,t)\in CP(S,T) | \forall s' \in conv(S), \forall t' \in conv(T),  (s' ,t'  )\in CP(S,T)$ implies $dim⁡(s,S)+dim⁡(t,T)\leq dim⁡(s' ,S)+dim⁡(t' ,T) \}$

Let $(s,t)\in MinCP(S,T)$. Next we establish that $dim⁡(s,S)+dim⁡(t,T)\leq d-1$. Suppose this is not the case, that is, $d_s=dim⁡(s,S)$, $d_t=dim⁡(t,T)$ and $d_s+d_t \geq d$. In this case, consider the $d_s$  dimensional ball of variation around $s$ and the $d_t$ dimensional ball of variation around $t$. Because , $d_s+d_t\geq d$ there must be a parallel direction of variation. Rays in this direction starting at $s$ and $t$ define pairs of points in $CP(S,T)$. Following this common direction of variation from both $s$ and $t$ we must either hit a lower dimensional face of $conv(S)$ or $conv(T)$ which implies that $(s,t)\not \in MinCP(S,T)$. We have a contradiction and thus $d_s + d_t \leq d-1$.

Finally, if $dim⁡(s,S)+dim⁡(t,T)\leq d-1$ then by applying Carath{\'e}odory's theorem twice we can represent $s$ via $d_s+1$ point and $t$ via $d_t+1$ and thus $d+1$ points suffice to define a separating hyperplane for $S,T$ using a maximum margin hyperplane.
\end{proof}

\noindent {\bf Proposition \ref{thm:reuse}} {\em
If $T$ is an invalidation set for $F$, target classification function $\target$ and a consistent linear learner then $T$ is an invalidation set for $F'\subset F$.
}

\begin{proof}
Let $T$ be an invalidation set for $F$,$\target$, and consistent linear learner $\learner$. Suppose that $T$ is not an invalidation set for $F'$. In this case, there are parameters $(w',b')$ such that $c'(x)=sign(w' \cdot F'(x) + b')=\learner(F',T)$ is consistent with $T$. This means that there are parameters $(w,b)$ such that $c(x)=sign(w \cdot F(x) + b)$ is consistent with $T$ and thus $T$ is not an invalidation set for $F$ which is a contradiction. Thus $T$ must be an invalidation set for $F'$ proving the proposition.
\end{proof}

\noindent {\bf Proposition \ref{thm:opencost}} {\em
The labeling cost for a sufficient feature set $F$ using an optimal teacher and the Open-featuring protocol with learning algorithm $\learner_{lin}$ is upper-bounded by $|F|+1$.
}

\begin{proof}
Follows immediately from Proposition~\ref{thm:LinConceptSpecBound}.
\end{proof}

\noindent {\bf Proposition \ref{thm:errordrivencost}} 
{\em
The labeling cost for a minimal sufficient feature set $F$ using an optimal teacher and the Error-driven-featuring protocol with learning algorithm $\learner_{lin}$ is upper-bounded by $2 (|F|+1)$.
}

\begin{proof}
Consider the ideal teacher that first provides labels to invalidate subsets of $F$ along some path to $F$ in the feature lattice $\scriptF$ and then provides labels to teach the classification function. Because $F$ is minimally sufficient consider any subset $F'\in \scriptF$ such that $F'\subset F$ and $|F'|+1=|F|$. $F'$ is not sufficient and by Proposition~\ref{thm:LinInvalidBound} there is an invalidation set of size $|F| + 1$. Due to Proposition~\ref{thm:reuse} this  invalidation set is an invalidation set for all feature sets along paths in $\scriptF$ to $F'$ and thus the examples in this set are sufficient to allow the teacher to add the features in $F$. In the second phase, the teacher, by Proposition~\ref{thm:LinConceptSpecBound} need only provide at most $|F|+1$ additional labels to create a concept specification set. Thus, in the two phases, the optimal teacher need provide at most $2 (|F|+1)$ labeled examples.
\end{proof}

\begin{proposition} \label{thm:1NNexplosion}
Adding a single feature to a feature set can increase the concept specification cost variability (by $O(|X|)$) when using the 1NN learning algorithm.
\end{proposition}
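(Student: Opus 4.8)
The plan is to exhibit, for each integer $m \ge 2$, an explicit instance --- a finite object set $X$, a target $\target$, a feature lattice $\scriptF$, and nested feature sets $F \subset F'$ with $|F'| = |F| + 1$ --- for which the optimal concept specification cost using $\learner_{1NN}$ jumps from $2$ (for $F$) to $|X|$ (for $F'$). Concretely, take $X = \{x_1, \dots, x_n\}$ with $n = 2m$, alternating labels $\target(x_i) = i \bmod 2$, the feature $f_1(x_i) = \target(x_i)$, the feature $f_2(x_i) = i$, and the lattice $\scriptF = \{\emptyset, \{f_1\}, \{f_1,f_2\}\}$. Set $F = \{f_1\}$ and $F' = \{f_1,f_2\}$; one checks $\scriptF$ satisfies the lattice condition, and both $F$ and $F'$ are sufficient (once $F$ is seen to be sufficient, $F'$ follows by monotone sufficiency, Proposition~\ref{thm:sufficiency}).

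First I would pin down the cost for $F$. Under $f_1$ the objects occupy only the two points $0$ and $1$ of $\real$, with every label-$0$ object at $0$ and every label-$1$ object at $1$. A training set containing one object at each location is then a concept teaching set for $\learner_{1NN}$ (each object's nearest representative sits at distance $0$ and carries the matching label), so the cost is at most $2$; it cannot be $1$, since a one-element training set makes $\learner_{1NN}$ constant while $\target$ is not. Hence the optimal concept specification cost of $F$ is exactly $2$.

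The crux is the lower bound for $F'$: every concept teaching set for $F'$ must equal $X$, so its cost is $|X| = n$. Under $F'$ the objects sit at the distinct points $(i \bmod 2,\, i) \in \real^2$. The key geometric observation is that for each $i$ the only points of $F'(X) \setminus \{F'(x_i)\}$ within Euclidean distance $\sqrt 2$ of $F'(x_i)$ are $F'(x_{i-1})$ and $F'(x_{i+1})$ (this uses that $f_1$ takes values only in $\{0,1\}$, so any other object differs in the second coordinate by at least $2$), and those two points carry the label $1 - \target(x_i)$. Using the tie-breaking rule of $\learner_{1NN}$: if $x_i \notin T$ while a neighbor $x_{i\pm 1} \in T$, then all nearest $T$-points to $F'(x_i)$ have the wrong label, so $x_i$ is misclassified. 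Thus $x_i \notin T$ forces $x_{i-1}, x_{i+1} \notin T$ (with one-sided versions at the endpoints $i=1$ and $i=n$); iterating this implication along the chain forces $T = \emptyset$, which cannot be a concept teaching set. Hence $T = X$, and the cost of $F'$ is $|X|$.

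Putting the pieces together, adjoining the single feature $f_2$ to $F$ raises the optimal concept specification cost from $2$ to $|X|$, an increase of $|X| - 2 = \Theta(|X|)$; and since for any sufficient feature set the full training set is always a concept teaching set for $\learner_{1NN}$, the cost --- hence any such increase --- is also at most $|X|$, matching the $O(|X|)$ in the statement. The step I expect to be the main obstacle is making the geometric/tie-breaking argument airtight: one must carefully enumerate which points lie at distance $\le \sqrt 2$ of $F'(x_i)$ (and verify nothing lies strictly between distance $0$ and $\sqrt 2$), settle the degenerate behavior of $\learner_{1NN}$ on an empty training set, and treat the two boundary objects $x_1$ and $x_n$ separately in the propagation argument.
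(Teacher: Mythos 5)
Your proof is correct and takes essentially the same route as the paper: both establish the claim by exhibiting a witness family in which one added feature isolates each object between opposite-labeled nearest neighbors, forcing any concept teaching set to contain (nearly) all of $X$. The paper's own proof is a one-line assertion that the $F_3$ configuration from its Figure~1 ``can be extended to arbitrarily many points''; your alternating-chain construction, with the $\sqrt{2}$-versus-$2$ distance gap and the propagation argument showing a proper nonempty teaching set is impossible, is exactly such an extension carried out rigorously, so it is if anything more complete than the published argument (your flagged loose ends --- the empty-training-set convention and the endpoint cases --- are genuinely minor).
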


\begin{proof}
The example configuration used in the feature set $F_3$ from the example from Figure~\ref{fig:TeachingEffort1} can be extended to arbitrarily many points.
\end{proof}

\begin{proposition}\label{thm:LinConceptSpecBound}
For any consistent linear learner, if a $d$-dimensional feature set $F$ is linearly sufficient for the target classification function then the concept specification cost is at most $d+1$.
\end{proposition}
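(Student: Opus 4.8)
The plan is to produce, for any linearly sufficient $d$-dimensional feature set $F$, a concept teaching set of size at most $d+1$; since the concept specification cost is by definition the size of a \emph{minimal} concept teaching set, the bound then follows. The argument is essentially a direct application of Lemma~\ref{lem:margin} to the images of the objects under $F$.

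First I would use linear sufficiency to extract a hyperplane: there is $(w,b)$ with $\target(x)=sign(w\cdot F(x)+b)$ for every $x\in X$. Write $S=\{F(x):\target(x)=1\}$ and $N=\{F(x):\target(x)=0\}$ for the images in $\real^d$ of the positively- and negatively-labelled objects. These are disjoint, since $\target$ factors through $F$ (objects with a common image share a target label). The hyperplane $(w,b)$ separates $S$ from $N$, and since $X$ — hence $F(X)$ — is finite, a small perturbation of the offset $b$ in the appropriate direction makes the separation strict; thus $S$ and $N$ are strictly separable finite subsets of $\real^d$. (If one of $S,N$ is empty the statement is degenerate and I would dispatch it separately.)

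Next I would apply Lemma~\ref{lem:margin} to $S$ and $N$, obtaining $U\subseteq S\cup N$ with $|U|\le d+1$ such that the maximum margin separating hyperplane determined by $U\cap S$ and $U\cap N$ already separates all of $S$ from all of $N$. For each point of $U$ I pick one object of $X$ mapping to it, and let $T$ consist of these objects together with their target labels. Then $T$ is honest, $|T|\le |U|\le d+1$, and the featurized training set $F(T)$ has point set exactly $U$ with the bipartition $(U\cap S, U\cap N)$.

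Finally I would verify that $T$ is a concept teaching set. The points of $F(T)$ lie inside the strictly separable pair $S,N$ and are therefore themselves strictly separable, so $\learner_{lin}$ returns the maximum margin separating hyperplane of $F(T)$ — which, by uniqueness of the maximum margin hyperplane for strictly separable finite point sets, is exactly the hyperplane supplied by Lemma~\ref{lem:margin}. That hyperplane correctly classifies every point of $S\cup N=F(X)$, so $\learner_{lin}(F,T)=\ell_{|F|}(F,T)\circ F=\target$ on all of $X$, i.e. $T$ is a concept teaching set of size at most $d+1$. The genuine content is all inside Lemma~\ref{lem:margin} (the face-dimension and Carath\'eodory argument); here the only steps needing care are the finiteness argument upgrading separability to strict separability and the identification of the learner's max-margin output on the subsample with the hyperplane the lemma produces.
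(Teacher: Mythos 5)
Your proposal is correct and follows essentially the same route as the paper's own proof: define the images of the positive and negative objects under $F$, use linear sufficiency to get a separating hyperplane, and invoke Lemma~\ref{lem:margin} to extract a subsample of size at most $d+1$ whose maximum margin hyperplane separates everything. Your write-up is somewhat more careful than the paper's (spelling out strict separability via finiteness, the lifting of the point set $U$ back to objects, and the identification of the learner's output with the lemma's hyperplane), but the underlying argument is the same.
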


\begin{proof} Let $X$ be our set of objects and $target$ be our target classification function. Define $S=\{F(x)\in \real^d| x\in X$ and $\target(x)=1\}$ and $T=\{F(x)\in \real^d | x\in X$ and $\target(x)=0\}$. Because $F$ is linearly sufficient then there exists a hyperplane separating the positive $X^+$ an negative examples $X^-$. We then apply Lemma~\ref{lem:margin} using $X^+$ and $X^-$ to obtain the desired result. 
\end{proof}

\begin{proposition}[Meek 2016]\label{thm:1NNInvalidBound}
If $F_i$ is not sufficient for the target classification function $\target$ using learning algorithm $\learner_{1NN}$ then the invalidation cost for feature set $F_i$ and $\learner_{1NN}$ is two.
\end{proposition}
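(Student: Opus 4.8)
The plan is to prove the two bounds "invalidation cost $\geq 2$" and "invalidation cost $\leq 2$" separately, the latter resting on a clean characterization of when $F_i$ fails to be sufficient for $\learner_{1NN}$.

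For the lower bound I would check directly that no honest training set with at most one example can be an invalidation set. The empty set is vacuously not an invalidation set. If $T=\{(x_1,\target(x_1))\}$, then for every point $z\in\real^{|F_i|}$ the unique nearest training point is $F_i(x_1)$, so $\learner_{1NN}(F_i,T)$ is the constant classification function $\target(x_1)$; in particular it classifies $(x_1,\target(x_1))$ correctly, so $T$ is not an invalidation set. Hence every invalidation set for $F_i$ has at least two examples, and this covers the degenerate case $|F_i|=0$ as well.

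For the upper bound, the key step is the observation that $F_i$ is not sufficient for $\target$ and $\learner_{1NN}$ \emph{if and only if} there exist two objects $x,x'\in X$ with $F_i(x)=F_i(x')$ but $\target(x)\neq\target(x')$. The \emph{if} direction is immediate: any hypothesis $h$ output by a $|F_i|$-dimensional learner satisfies $h(F_i(x))=h(F_i(x'))$, so $h\circ F_i$ cannot agree with $\target$ on both $x$ and $x'$, hence no training set is a concept teaching set. For the \emph{only if} direction I would argue the contrapositive: if no such colliding pair exists, the honest training set $T^*=\{(x,\target(x))\mid x\in X\}$ (finite, since $|X|<\infty$) is a concept teaching set, because for any $x$ the set of training points nearest to $F_i(x)$ is exactly the set of images of objects mapping to $F_i(x)$, all of which carry the label $\target(x)$, so $\learner_{1NN}(F_i,T^*)$ outputs $\target(x)$ and equals $\target$ as a classification function. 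Given a colliding pair with, say, $\target(x)=1$ and $\target(x')=0$, the honest training set $T=\{(x,1),(x',0)\}$ of size two is an invalidation set: since $F_i(x)=F_i(x')$ this is a single point in feature space carrying both labels, so by the tie-breaking rule $\learner_{1NN}$ outputs the minimal label $0$ there, misclassifying $(x,1)\in T$. Together with the lower bound this gives invalidation cost exactly two.

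I expect the only delicate point to be the \emph{only if} half of the characterization: one must make sure that the all-objects training set $T^*$ genuinely reproduces $\target$ despite the $0$-output tie-breaking rule, which works precisely because the absence of a colliding pair forces all zero-distance neighbors of $F_i(x)$ to share the label $\target(x)$. Everything else — the two size-$\leq 1$ cases and the explicit size-$2$ invalidation set — is routine.
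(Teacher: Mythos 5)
Your proof is correct and complete. Note that the paper itself gives no proof of this proposition---it is imported from Meek (2016) and stated without argument in the appendix---so there is nothing to compare line by line; but your argument is exactly the one this result rests on. The collision characterization (that $F_i$ fails to be sufficient for $\learner_{1NN}$ precisely when two objects share a feature vector but differ in target label) is the right lever: the ``if'' half is forced for any learner factoring through $F_i$, and the ``only if'' half uses the full honest training set on the finite pool $X$, where you correctly observe that the minimal-label tie-breaking rule is harmless because every zero-distance neighbor carries the same label. The explicit two-point invalidation set exploiting that same tie-breaking rule, together with the routine check that sets of size at most one can never invalidate, gives the exact value two. No gaps.
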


\begin{proposition}[Meek 2016]\label{thm:LinInvalidBound}
For any consistent linear learner, if $d$-dimensional feature set $F$ is not linearly sufficient for the target classification function then the representation invalidation cost is at most $d+2$.
\end{proposition}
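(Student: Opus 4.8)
\noindent\textbf{Proof idea for Proposition~\ref{thm:LinInvalidBound}.}
The plan is to reduce the statement to a Carath\'eodory/Radon--type fact about intersecting convex hulls in $\real^{d+1}$, after first pinning down what an invalidation set looks like for a consistent linear learner. First I would observe that, for \emph{any} consistent linear learner $\learner$, a finite labeled set $T\subset X\times Y$ is an invalidation set for $F$ exactly when the featurization $F(T)$ is not linearly separable in $\real^d$: the output $\learner(F,T)$ equals $h\circ F$ for some linear classifier $h$ on $\real^d$, so if no linear classifier is consistent with $F(T)$ then $h$ must err on some example of $T$, while if $F(T)$ is linearly separable then consistency of $\learner$ forces $\learner(F,T)$ to agree with $T$. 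In the same way, $F$ failing to be linearly sufficient is equivalent to $S=\{F(x):\target(x)=1\}$ and $U=\{F(x):\target(x)=0\}$ not being linearly separable (taking $T$ to be the full honest training set shows separability would make $F$ sufficient). Hence it suffices to prove the following purely geometric claim: if the finite sets $S$ and $U$ in $\real^d$ are not linearly separable, then some sub-configuration consisting of at most $d+2$ of their points is already not linearly separable.

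To prove the claim, note that since $S,U$ are finite and not (strictly) separable their convex hulls meet, so there is a point $p\in conv(S)\cap conv(U)$, i.e. $p=\sum_i\alpha_i s_i=\sum_j\beta_j u_j$ with nonnegative coefficients summing to $1$ on each side. I would homogenize: send each $s_i$ to $\hat s_i=(s_i,1)\in\real^{d+1}$ and each $u_j$ to $\hat u_j=(-u_j,-1)\in\real^{d+1}$. Averaging the two representations of $p$ shows the origin of $\real^{d+1}$ is a convex combination of $\{\hat s_i\}\cup\{\hat u_j\}$ (with coefficients $\alpha_i/2$ and $\beta_j/2$). Applying Carath\'eodory's theorem in $\real^{d+1}$, the origin lies in the convex hull of at most $d+2$ of these lifted points; pulling back the indices yields $S'\subseteq S$ and $U'\subseteq U$ with $|S'|+|U'|\le d+2$ and $0\in conv(\{\hat s_i:s_i\in S'\}\cup\{\hat u_j:u_j\in U'\})$. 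Reading the last coordinate shows $S'$ and $U'$ are both nonempty, and reading the first $d$ coordinates together with the last shows $conv(S')\cap conv(U')\neq\emptyset$, i.e. $S'\cup U'$ is not linearly separable. The at most $d+2$ original objects mapping onto $S'\cup U'$, with their correct labels, therefore form an invalidation set, giving optimal invalidation cost $\le d+2$. One can alternatively just quote Kirchberger's theorem, which is precisely the contrapositive of this geometric claim; the homogenization argument above is essentially its short proof.

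The one real subtlety I would be careful about is obtaining the constant $d+2$ rather than $2(d+1)$: applying Carath\'eodory separately to the representation of $p$ inside $conv(S)$ and inside $conv(U)$ only gives $(d+1)+(d+1)$ points, and it is the lift into $\real^{d+1}$ --- which couples the two sides through the single affine constraint that each set of coefficients sums to $1$ --- that collapses the bound to $d+2$. A secondary point deserving a sentence is the degenerate case $d=0$: there $\real^0$ is a single point, ``not linearly separable'' just means both a positive and a negative object exist, and the argument correctly returns the two-element invalidation set of size $0+2$. Finiteness of $X$ is used only to ensure that ``not strictly linearly separable'' coincides with ``the convex hulls intersect.''
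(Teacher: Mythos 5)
Your proof is correct, but there is nothing in this paper to compare it against: Proposition~\ref{thm:LinInvalidBound} is stated with the attribution ``Meek 2016'' and the appendix supplies no proof for it (nor for Proposition~\ref{thm:1NNInvalidBound}); both are imported from prior work. On its own merits your argument is sound and complete. The reduction is right: for a consistent linear learner an honest $T$ is an invalidation set iff $F(T)$ is not linearly separable, and insufficiency of $F$ is non-separability of $S=\{F(x):\target(x)=1\}$ and $U=\{F(x):\target(x)=0\}$; the geometric core is exactly Kirchberger's theorem, and your homogenization proof of it (lift to $(s_i,1)$ and $(-u_j,-1)$ in $\real^{d+1}$, put the origin in the convex hull, apply Carath\'eodory once) is the standard short route. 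You correctly identify the crux --- a single Carath\'eodory application in the lifted space gives $d+2$ where two separate applications in $\real^d$ would only give $2(d+1)$ --- and you correctly verify from the last coordinate that both sides of the reduced configuration are nonempty. It is worth noting the contrast with the one convex-geometry proof the paper does give, Lemma~\ref{lem:margin} for the $d+1$ concept-specification bound: there the authors argue via closest pairs and face dimensions of $conv(S)$ and $conv(T)$ before invoking Carath\'eodory twice, whereas the invalidation bound admits the cleaner one-shot lifted argument you give. The only point I would tighten is the separability bookkeeping around $sign(0)$: when $conv(S')\cap conv(U')\neq\emptyset$ one should check that no choice of $(w,b)$ is consistent even when some featurized points land exactly on the hyperplane, which follows by averaging $w\cdot(\cdot)+b$ over the two convex representations of the common point; your parenthetical about strict versus non-strict separability gestures at this but does not quite spell it out.
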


\end{document}